 \newcommand{\B}{\mathcal{B}}
\newcommand{\fml}[1]{\mathcal{#1}}
\newcommand{\ind}{\perp\!\!\!\!\perp} 
\newcommand{\beitemize}{\begin{list}{$\bullet$}{\topsep=1.5pt \parsep=0pt \itemsep=1pt \leftmargin=1em }} 
\newcommand{\enitemize}{\end{list}}
\newcommand{\beenumerate}{\hspace{-0.5in} \begin{enumerate}\topsep=1pt \parsep=0pt \itemsep=-3pt} \newcommand{\enenumerate}{\end{enumerate}}
\newcommand{\belist}{\begin{list}{$\bullet$}{\topsep=1.5pt \parsep=0.5pt \itemsep=1pt \leftmargin=2.25em \labelwidth=1.0em \labelsep=0.5em \partopsep=1.5pt}} 
\newcommand{\enlist}{\end{list}}
\newtheorem{theorem}{{\bf Theorem}}
\newtheorem{definition}{{\bf Definition}}
\newtheorem{example}{{\bf Example}}
\newtheorem{proposition}{{\bf Proposition}}
\newcommand{\memoside}[1]{\ifthenelse{\boolean{includeMemo}}{\todo[caption={},color=green!20!]{{\footnotesize #1}}}}
\newcommand{\memo}[1]{\ifthenelse{\boolean{includeMemo}}{\todo[inline,caption={},color=green!20!]{#1}}}
\newcommand{\memob}[1]{\ifthenelse{\boolean{includeMemo}}{\todo[inline,caption={},color=blue!20!]{#1}}}
\newcommand{\xhdr}[1]{\vspace{5pt}\noindent\textbf{#1 }}
\newcommand{\ignore}[1]{}
\newcommand{\squishlist}{
\begin{list}{{{\small{$\bullet$}}}}
{\setlength{\itemsep}{3pt}      
\setlength{\parsep}{3pt}
\setlength{\topsep}{3pt}       
\setlength{\partopsep}{3pt}
\setlength{\leftmargin}{1em} 
\setlength{\labelwidth}{1em}
\setlength{\labelsep}{0.5em} } }
\newcommand{\squishend}{  \end{list}}
\newcommand{\squishenum}{
\begin{list}{$\bullet$}{ 
    \setlength{\itemsep}{1pt}
    \setlength{\parsep}{0pt}
    \setlength{\topsep}{1.5pt}
    \setlength{\partopsep}{0pt}
    \setlength{\leftmargin}{2em}
    \setlength{\labelwidth}{1.5em}
    \setlength{\labelsep}{0.5em} } }
\title{Human-Aware Belief Revision: A Cognitively Inspired Framework for Explanation-Guided Revision of Human Models
}
\author{
    Stylianos Loukas Vasileiou,
    William Yeoh
}
\title{My Publication Title --- Single Author}
\author {
    Author Name
}
\title{My Publication Title --- Multiple Authors}
\author {
    % Authors
    First Author Name\textsuperscript{\rm 1},
    Second Author Name\textsuperscript{\rm 2},
    Third Author Name\textsuperscript{\rm 1}
}
\begin{document}

\maketitle

\begin{abstract}
Traditional belief revision frameworks often rely on the principle of minimalism, which advocates minimal changes to existing beliefs. However, research in human cognition suggests that people are inherently driven to seek explanations for inconsistencies, thereby striving for explanatory understanding rather than minimal changes when revising beliefs. Traditional frameworks often fail to account for these cognitive patterns, relying instead on formal principles that may not reflect actual human reasoning. To address this gap, we introduce \textit{Human-Aware Belief Revision}, a cognitively-inspired framework for modeling human belief revision dynamics, where given a human model and an explanation for an explanandum, revises the model in a non-minimal way that aligns with human cognition. Finally, we conduct two human-subject studies to empirically evaluate our framework under real-world scenarios. Our findings support our hypotheses and provide insights into the strategies people employ when resolving inconsistencies, offering some guidance for developing more effective human-aware AI systems.

\end{abstract}

\section{Introduction}
When faced with new information that challenges their existing beliefs, people naturally seek to adjust their beliefs to accommodate this new information. Determining the most rational way to make such adjustments is not always straightforward. While rational decision-making helps people identify inconsistencies, it does not necessarily guide them on which beliefs to revise to restore consistency.

One perspective on rational change is the principle of \textit{minimalism} (or information economy), which emphasizes a minimal change to one's beliefs. \citet{james1907pragmatism} eloquently expressed this notion, suggesting that new information should be integrated in a way that slightly stretches people's existing beliefs just enough to incorporate the new information. On the one hand, minimalism is not only practical but has also been championed by numerous cognitive scientists in the context of both scientific and everyday decision-making \citep{gardenfors1988knowledge,harman1986change}. Nonetheless, minimalism can be limiting; it does not always yield the most effective or insightful belief revisions, particularly in complex real-world situations.

Cognitive studies on human belief revision indicate that people may not always aim for minimality when revising their beliefs. Instead, they first seek to understand the nature of the inconsistency \citep{thagard1989explanatory,johnson2004reasoning}. When encountering conflicting information, individuals often generate \emph{explanations} to reconcile these inconsistencies, as explanations offer a clearer guide for future actions than mere belief adjustments \citep{craiknature,keil2006explanation} and play a vital role in communicating one's understanding of the world \citep{chi1994eliciting,lombrozo2007simplicity}. This approach suggests that people construct explanations to resolve inconsistencies, leading to belief revisions that are often not minimal \citep{johnson2004reasoning,khemlani2013cognitive}.

By drawing inspiration from the aforementioned cognitive studies, we challenge the primacy of minimalism in belief revision frameworks aimed at human users. We argue that a drive for \textit{explanatory understanding}, rather than mere consistency of beliefs, is a key feature of human reasoning that belief revision frameworks should account for. Building on this foundation, we introduce the \emph{human-aware belief revision} framework, which formalizes the process of revising human beliefs in light of explanations. Central to our framework is the \textit{explanation-guided} revision operator that, given a human model and an explanation for an explanandum (the fact to be explained), revises the model in a (possibly) non-minimal way while preserving the new information.

Moreover, we present empirical findings from two human-subject studies that explore human belief revision behavior in real-world scenarios. These studies not only provide support for the explanation-guided revision but also yield valuable insights into the strategies people employ when creating explanations for resolving inconsistencies. The inclusion of empirical evaluation sets our work apart from purely theoretical approaches, grounding it in real-world human reasoning patterns.

In summary, our contributions include: (1) We introduce a framework for human-aware belief revision inspired by human cognition. We focus on explanatory understanding over minimal changes, and show how to construct the explanation-guided revision operator; (2) We conduct empirical evaluations through two human-subject studies that provide robust evidence for the applicability of our proposed framework. Our main thesis is that \textit{an explanation-guided belief revision operator aligns more closely with natural human reasoning patterns, thus offering a human-aware approach to belief revision.}

\section{Related Work}
\label{sec:related}

The study of belief revision has been a focal point of research in both philosophy and AI, evolving over several decades. Among the most influential frameworks in the this domain is the AGM framework \cite{alchourron1985logic,gardenfors1988knowledge,gardenfors1984epistemic}. While the AGM framework has been highly influential \cite{ferme2018belief}, it unfortunately does not allow for an account of explanation. Crucially, the hallmark of AGM is the principle of minimalism, a point that we critically reconsidered in this paper. When it comes to human belief revision, the explanatory understanding together with our empirical results indicate that humans tend to perform non-minimal revisions to their beliefs that are guided by explanations.

Our work is very closely related with the belief revision operator proposed by \citet{falappa2002explanations}, where they emphasized the role of explanations in belief revision, particularly when the incoming information is inconsistent with the existing beliefs. However, there are some subtle differences: their revision operator adheres to minimalism, and makes it possible for the explanation to be rejected, consequently rejecting the explanandum. In the appendix, we provide an example that better illustrates these differences.

We are not the first to question adherence to minimal changes in belief revision. Notably, \citet{rott2000two} has critically examined minimalism in the context of AGM theory. He argued that, while intuitively appealing, may not adequately capture the complexities of real-world belief revision processes. \citet{kern2002principle} has also addressed the inadequacy of the minimalist approach guided by the AGM postulates in preserving conditional beliefs (if-then statements) during revision, and presented a thorough axiomatization of conditional preservation in belief revision. Nonetheless, in our framework we do not make such restrictions, and importantly, the empirical results we obtained refute the preservation of conditionals, at least in human belief revision.

Cognitive scientists and psychologists have also critiqued the principle of minimal change in belief revision, showing that human reasoners use a different strategy when revising their beliefs with new, conflicting information \citep{elio1997belief,politzer2001belief,elio2022disbelieve,khemlani2013cognitive,johnson2004reasoning}, namely the \textit{explanatory understanding}. As we expressed throughout this paper, this approach suggests that people create explanations to resolve inconsistencies, leading them to make greater than minimal changes to the information that they have.

Finally, our work finds direct relevance in the domain of human-aware AI \citep{kambhampati2020challenges}. A key component of such systems is the assumption of a human (mental) model \cite{carroll1988mental}, which represents the AI agent's understanding of the human's knowledge, beliefs, goals, and decision-making processes. Generally, the human models are incorporated into the agent's deliberative processes in order to generate better explanations that align with the respective human users \citep{chakraborti2017plan,sreedharan2021foundations,son2021model,vas21,vasileioulogic}. Therefore, for effective human-AI interaction, these models must be maintained and updated frequently with new information.
Nevertheless, most existing works adhere to the principle of minimalism, that is, they apply minimal changes to the human models. This approach, while intuitive, may not always accurately reflect the nature of human belief revision. Our proposed framework in this paper is aimed at enhancing such works by providing insights into how people actually revise their beliefs when faced with inconsistencies.

\section{Background}
\label{sec:background}

\subsection{Belief Revision Theory}
\label{sec:revision}

Practical applications of belief revision theories typically assume that the agent's beliefs are represented as non-closed sets of formulae called \textit{belief bases} \citep{hansson1994kernel,hansson1999survey}.\footnote{Theoretical approaches to belief revision also consider sets of formulae that are closed under a consequence relation \citep{alchourron1985logic}. See \citet{ferme2018belief} for an overview of different approaches to belief revision.} There, revision operators are typically constructed via \textit{kernel functions} \citep{hansson1994kernel} that select among (minimal) subsets of a belief base that contribute to making it imply a formula. This is a general approach that relies on incision functions to determine the beliefs to be removed from each kernel.

% rely on constructive approaches for obtaining revision operators that ``select'' subsets of the agent's beliefs to revise \citep{hansson1994kernel,hansson1999survey}.   
\xhdr{Minimalism}
\label{sec:discussion:minimalism}
The \textit{principle of minimalism} (also called the principle of informational economy or minimal mutilation) is one of the basic conceptual principles underlying belief revision frameworks:

\begin{quote}
    \textbf{Principle of Minimalism:} When an agent with a prior belief base is presented with a new belief that is inconsistent with it, they should revise it with respect to the new belief to get a posterior belief base that is \textit{the closest belief base to their prior belief base}.
\end{quote}

In essence, the principle of minimalism states that the agent's primary goal when resolving inconsistencies is to make minimal changes to their (existing) beliefs. This basic principle is encapsulated by certain postulates, such as recovery \citep{gardenfors1982rules,alchourron1985logic}, core-retainment, and relevance \citep{hansson1991belief,falappa2002explanations}, or by constructing revision operators that restrict revisions to minimal subsets of the original belief base \cite{falappa2002explanations}.

\xhdr{Explanatory Understanding} However intuitive and plausible minimalism appears to be, it does not always hold true in human belief revision dynamics. Studies in cognitive science show that when people are faced with inconsistencies, they first construct (or seek) explanations to resolve the inconsistencies, which consequently lead them to revise their beliefs in a non-minimal fashion \citep{elio1997belief,johnson2004reasoning,khemlani2013cognitive,walsh2009changing}. We will refer to this as \textit{explanatory understanding}:

\begin{quote}
    \textbf{Explanatory Understanding:} When an agent with a prior belief base is presented with a new belief that is inconsistent with it, they first seek for an explanation to explain the origin of the inconsistency, which can then lead them to make greater than minimal changes to their belief base.
\end{quote}

The explanatory understanding poses a challenge to the principle of minimalism, and thus to most belief revision frameworks that adhere to it. Crucially, this means that existing belief revision frameworks may fail to be successfully applied to human-AI settings. In this work, we argue against minimalism and for explanatory understanding in human-centric belief revision frameworks.

\subsection{Logical Preliminaries}

We will adopt an equality-free first-order language $\mathcal{L}$ comprising (finite) sets of constants $\mathcal{C}$, variables $\mathcal{V}$, predicates $\mathcal{P}$, and no explicit existential quantifiers. An atom takes the form $p(t_1, \ldots,t_n)$, where $p$ is a predicate and $t_i \in \mathcal{C} \cup \mathcal{V}$ are terms. A \textit{ground} (or instantiated) atom is an atom without variables, otherwise it is \textit{lifted}. A formula is built out of atoms using quantifier $\forall$ and the usual logical connectives $\neg$, $\vee$, and $\land$. An \textit{interpretation} is an assignment of $true$ or $false$ to each ground atom in a set of formulae. If an interpretation satisfies a set of formulae then it is called a \textit{model} of that set. A set of formulae is \textit{consistent} if it has at least one model, otherwise it is \textit{inconsistent}. 

To align with the cognitive flexibility observed in human reasoning, we make the fundamental assumption that beliefs are \textit{defeasible}---that is, they are open to change and retraction. To capture a more nuanced structure of human beliefs, we represent a human model as a \textit{belief base} consisting of \textit{facts} and \textit{general rules} from language $\mathcal{L}$. This approach allows us to model both specific pieces of information and the broader principles that guide human reasoning. Formally,

\begin{definition}[Belief Base]
\label{def:model}
A \textit{belief base} is a tuple $\B = \langle \mathcal{F}, \mathcal{R} \rangle$, where:
\begin{itemize}
    \item $\mathcal{F} \subseteq \mathcal{L}$ is a set of facts, each of which is a ground atom or its negation.
    \item $\mathcal{R} \subseteq \mathcal{L}$ is a set of general rules, each having the form of a conditional $P(\vec{x}) \rightarrow Q(\vec{y})$, where $\vec{x}$ and $\vec{y}$ are variable tuples, and $P(\vec{x})$ and $Q(\vec{y})$ are formulae in $\mathcal{L}$.
\end{itemize}
We denote the ground version of $\B$ as $\B^\gamma$.
\end{definition}

A belief base $\B$ entails a formula $\varphi$, denoted by $\B \models \varphi$, iff $\B^\gamma \cup \lnot \{\varphi\} \models \bot$, where $\bot$ denotes falsum. We will also use $\Gamma(\B)$ to denote the (primitive) \textit{consequences} (e.g., set of all ground atoms ) of $\B$, i.e., $\Gamma(\B) = \{ \phi \: | \: \phi \in \fml{L}, \B \models \phi \}$.\footnote{This is also called the backbone of the belief base \cite{parkes1997clustering}.}

For convenience, and unless specified otherwise, we will write a belief base as the set $\B = \fml{F}\cup \fml{R}$, implicitly assumed to be a conjunction of the facts and rules.

\section{Towards Human-Aware Belief Revision}
\label{framework}

Consider an illustrative scenario involving two agents, Alice and Bob. Suppose that Alice believes that (a) \textit{If people are worried, then they have insomnia}, (b) \textit{Diana is worried}, and (c) \textit{Charlie is worried}. As such, she believes that (d) \textit{Charlie and Diana have insomnia}. Now, suppose that Bob tells Alice that (e) \textit{Charlie does not have insomnia}. Before incorporating this new information that contradicts her beliefs, Alice will naturally seek an explanation from Bob. For instance, Bob would then explain to Alice that (f) \textit{Charlie has a coping strategy} and that (g) \textit{people with coping strategies may not have insomnia despite being worried}. If the explanation stands up to scrutiny, Alice will then integrate this explanation and into her beliefs. In other words, it is the explanation that will drive Alice's revision process.

An explanation aims to explain (or, ``rationalize'') a particular phenomenon, referred to as the \textit{explanandum}, to someone. Essentially, an explanation is reasoning in ``reverse'', e.g., a set of beliefs adduced as explanations of the explanandum. More formally,

\begin{definition}[Explanation]
\label{def:explanation}
We say that $\fml{E} \subseteq \mathcal{L}$ is an \emph{explanation} for an explanandum $\varphi \subseteq \mathcal{L}$ iff: (1) $\fml{E} \models \varphi$; (2) $\fml{E} \not \models \bot$; and (3) $\forall \fml{E}' \subset \fml{E}$, $\fml{E}' \not \models \varphi$.
\end{definition}

The first condition determines that the explanandum is derived by the set of beliefs $\fml{E}$, while the second condition averts the possibility that an explanandum is derived from an inconsistent set. The final condition ensures that there are no irrelevant beliefs in the explanation. Note that we assume the explanation is also a belief base (Definition~\ref{def:model}).

\begin{example}

   Building~on~the~illustrative scenario, Alice's belief base is $\B_A = \{\text{Wor}(Charlie), \text{Wor}(Diana), \forall x. \text{Wor}(x)\!\!\rightarrow\!\!\text{Ins}(x) \}$. It is easy to see that $\B_A \models \text{Ins}(Charlie)$. Upon encountering the contradictory explanandum $\lnot \text{Ins}(Charlie)$, the explanation for it from Bob is $\fml{E} =  \{\text{Wor}(Charlie), \text{Cop}(Charlie), \forall x. \text{Wor}(x) \land \text{Cop}(x) \rightarrow \lnot \text{Ins}(x) \}$.

\end{example}

What is important to note here is that the explanation should aim to enable Alice's \textit{explanatory understanding} of the explanandum, that is, Alice understands why Charlie does not have insomnia if she can produce an explanation for it. Therefore, a good explanation should drive the receiving agent's belief revision in such a way that explanatory understanding of the given explanandum is satisfied.

\begin{definition}[Explanatory Understanding]
\label{def:explanatory_understanding}
    Let $\B$ be the belief base of an agent, $\fml{E}$ an explanation for explanandum $\varphi$, and $\odot$ a revision operator such that $\B \odot \fml{E}$ denotes the revision of $\B$ with $\fml{E}$. We say that the agent has explanatory understanding of $\varphi$ after receiving $\fml{E}$ if $\B \odot \fml{E} \models \varphi$.
    \label{expl-u}
\end{definition}

\begin{example}
Continuing from Example~\ref{exp2}, notice that the Alice's belief base $\B_A$ and the explanation $\fml{E}$ are inconsistent, i.e., $\B_A \cup \fml{E} \models \bot$. Thus, a revision operator $\odot$ must re-establish consistency within Alice's belief base while enabling explanatory understanding of the explanandum $\lnot In(Charlie)$. But how this revision should happen? Should Alice, in light of $\fml{E}$, still maintain the general belief that people have insomnia if they are worried?
\end{example}

When resolving inconsistencies with explanations, we would like to have a revision operator that, on the one hand, yields a consistent revised belief base, but on the other hand guarantees explanatory understanding of the explanandum. But how should such a revision operator behave? In what follows, we argue it should not be constrained by the principle of minimalism, e.g., minimal changes to the belief base, but rather be guided by the explanation, which might lead to larger than minimal revisions.

\subsection{The Explanation-Guided Revision Operator}

We build upon the concept of \textit{kernel revision} \cite{hansson1994kernel}. Particularly, we assume that the belief base $\B$ we are dealing with is fully grounded,\footnote{This restriction is only made for convenience and illustrative purposes. The results continue to hold for lifted belief bases.} and start by defining the notion of \emph{correction kernel}:

\begin{definition}[Correction Kernel]
    Let $\B$ be a belief base and $\fml{E}$ an explanation for explanandum $\varphi$. If $\B \cup \fml{E} \models \bot$, then the correction kernel of $\B \cup \fml{E}$ is defined as $(\B \cup \fml{E})^{\bot} = \{\B' \: | \: \B' \subseteq \B \cup \fml{E}, (\B \cup \fml{E}) \setminus \B' \not \models \bot, \text{ and } (\B \cup \fml{E}) \setminus \B' \not = \emptyset \}$. If $\B \cup \fml{E} \not \models \bot$, then $(\B \cup \fml{E})^{\bot} = \B \cup \fml{E}$.

\end{definition}

In other words, the correction kernel of an inconsistent belief base is the set of all consistent subsets of the belief base whose removal render the belief base consistent. The elements of the correction kernel are called \emph{correction sets}. Note how we do not impose any minimality constraints on the resulting correction sets.

\begin{example}
    Consider the belief base $\B= \{\text{Wor}(Charlie) , \text{Wor}(Charlie) \rightarrow \text{Ins}(Charlie)\}$ and explanation $\fml{E} = \{\lnot \text{Ins}(Charlie) \}$ for explanandum $\lnot \text{Ins}(Charlier)$. The correction kernel of $\B \cup \fml{E}$ is $(\B \cup \fml{E})^\bot = \{ \{ Wo(Charlie)\}$, $\{ \text{Wor}(Charlie)\rightarrow \text{Ins}(Charlie)\}$, $\{ \lnot \text{Ins}(Charlie) \}$, $\{ \text{Wor}(Charlie), \text{Wor}(Charlie) \rightarrow \text{Ins}(Charlie)\}$, $\{ \text{Wor}(Charlie), \lnot \text{Ins}(Charlie)\}$, $\{ \lnot \text{Ins}(Charlie), \text{Wor}(Charlie) \rightarrow \text{Ins}(Charlie)\} \}$.
\end{example}

As mentioned earlier, we want our operator to not only restore consistency (with no minimality guarantees), but also to enable explanatory understanding of the explanandum $\varphi$ (Definition~\ref{expl-u}). This leads us to define the notion of the \emph{$\varphi$-preserving (explanandum-preserving) selection function}:

\begin{definition}[$\varphi$-Preserving Selection Function]
    Let $\B$ be a belief base and $\fml{E}$ an explanation for explanandum $\varphi$. A $\varphi$-preserving selection function for $\B$ with respect to $\fml{E}$ is defined as $\Sigma: 2^{2^{\fml{L}}} \mapsto 2^{\fml{L}}$ such that: (1)~$\Sigma((\B \cup \fml{E})^\bot) \in (\B \cup \fml{E})^\bot$; and (2) $ (\B \cup \fml{E}) \setminus \Sigma((\B\cup \fml{E})^\bot) \models \varphi$.
\end{definition}

A $\varphi$-preserving selection function selects a correction set from the correction kernel $(\B\cup \fml{E})^\bot$ (condition (1)) whose removal does not affect the entailment of the explanandum $\varphi$ (condition (2)). Note that in the case where $\B\cup \fml{E} \not \models \bot$, then $\Sigma((\B\cup \fml{E})^\bot) = \emptyset$.

\begin{example}
    Consider the belief base and explanation from Example~\ref{exp2}. Possible results of the $\lnot In(Charlie)$-preserving selection function $\Sigma((\B\cup \fml{E})^\bot)$ are $\{ \text{Wor}(Charlie)\}$, $\{  \text{Wor}(x) \rightarrow \text{Ins}(x)\}$ and $\{\text{Wor}(Charlie), \text{Wor}(x) \rightarrow \text{Ins}(x)\}$.
    \label{example3}
\end{example}

Having defined the correction kernel and the $\varphi$-preserving selection function, we now formally define the \textit{explanation-guided belief revision operator}, which is a function mapping a belief base and an explanation to a revised belief base:

\begin{definition}[Explanation-Guided Belief Revision Operator]
    Let $\B$ be a belief base, $\fml{E}$ an explanation for explanandum $\varphi$, and $\Sigma$ a $\varphi$-preserving selection function. The operator of explanation-based belief revision on $\B$ with $\fml{E}$ is defined as $\odot : 2^\mathcal{L} \times 2^\mathcal{L} \mapsto 2^\mathcal{L}$, $\B\odot \fml{E} = (\B\cup \fml{E}) \setminus \Sigma((\B\cup \fml{E})^\bot)$.
\end{definition}

The mechanism of the explanation-guided revision operator is to first add the explanation $\fml{E}$ to the belief base $\B$, and then retract from the result a correction set by means of a selection function that makes a choice among possible sets in the correction kernel of $\B\cup \fml{E}$, while also ensuring that the resulting belief base entails the explanandum. For the theoretically oriented reader, in the appendix we describe some basic postulates and an axiomatization of the explanation-guide operator.

\begin{example}
     Consider the belief base $\B$ and explanation $\fml{E}$ from Example \ref{example3}, and assume that we select $\Sigma ((\B\cup \fml{E})^\bot) = \{ \text{Wor}(Charlie) \rightarrow \text{Ins}(Charlie) \}$. Then the explanation-guided revision on $\B$ with $\fml{E}$ is $\B\odot \fml{E} = \{\text{Wor}(Charlie),  \text{Wor}(Charlie) \rightarrow \text{Ins}(Charlie), \lnot \text{Ins}(Charlie) \} \setminus \{ \text{Wor}(Charlie) \rightarrow \text{Ins}(Charlie) \} = \{\text{Wor}(Charlie), \lnot \text{Ins}(Charlie) \}$. Another possible revision is $\B\odot \fml{E} = \{\lnot \text{Ins}(Charlie) \}$. In fact, no matter the choice of the selection function, it is guaranteed that $\B\odot \fml{E} \models \lnot \text{Ins}$.
\end{example}

It is important to mention that the operator's result--specifically, the determination of which beliefs to retract--is influenced by the epistemic attitude of the agent, insofar as the agent's preexisting beliefs, level of skepticism, and openness to new information shape the outcome of belief revision. As we focus on the importance of explanations and non-minimal revisions in this paper, we leave this nuanced aspect of belief dynamics for future work.

\xhdr{A Simple Measure of Belief Change:}
Measures for calculating the amount of belief change typically depend on counting all the beliefs that change their values \cite{elio1997belief,harman1986change}. As such, to have an effective theoretical measure for quantifying the amount of change in a belief base, we propose the following definition:

\begin{definition}[Measure of Belief Change]
Let $\B$ be a prior belief base and $\B'$ a posterior belief base. We define the measure of belief change between $\B$ and $\B'$ as:
\[\fml{D}(\B, \B') = \frac{|\Gamma(\B) \vartriangle \Gamma(\B')|}{|\Gamma(\B) \cup \Gamma(\B')|},\] where $ \Gamma(\B) \vartriangle \Gamma(\B') = (\Gamma(\B) \setminus \Gamma(\B') \cup (\Gamma(\B') \setminus \Gamma(\B))$, and $\Gamma(\B)$ is the consequences of $\B$.
                                                                                                                    
\end{definition}

\begin{example}
\label{ex:change-measure}
    Consider the prior belief base $\B = \{\text{Wor}(Charlie)$, $\text{Wor}(Diana)$, $\text{Wor}(Charlie) \rightarrow \text{Ins}(Charlie)$, $\text{Wor}(Diana) \rightarrow \text{Ins}(Diana) \}$, with consequences $\Gamma(\B) = \{ \text{Wor}(Charlie), \text{Wor}(Diana), \text{Ins}(Charlie), \text{Ins}(Diana)\}$.
     
    Now, assume the explanation $\fml{E} = \{\text{Wor}(Charlie)$, $\text{Cop}(Charlie)$, $\text{Wor}(Charlie) \land \text{Cop}(Charlie)\rightarrow \lnot\text{Ins}(Charlie)\}$ for explanandum $\lnot \text{Ins}(Charlie)$, and consider the following revisions of $\B$ with $\fml{E}$:

    \squishlist
    \item \textbf{Minimal:} $\B' = \B \odot \fml{E} = \{$ $\text{Wor}(Charlie)$, $\text{Wor}(Diana)$, $\text{Wor}(Diana) \rightarrow \text{Ins}(Diana)$, $\text{Cop}(Charlie)$, $\text{Wor}(Charlie) \land \text{Cop}(Charlie)\rightarrow \lnot\text{Ins}(Charlie) \}$. The consequences of $\B'$ are $\Gamma(\B') = \{ \text{Wor}(Charlie), \text{Wor}(Diana), \text{Ins}(Diana)$, $\text{Cop}(Charlie)$, $\lnot \text{Ins}(Charlie)\}$. Computing the measure of belief change between $\B$ and $\B'$ we get $\fml{D}(\B, \B') = \frac{3}{6} = 0.5$.

   \item \textbf{Non-minimal:} $\B''= \B \odot \fml{E} =$ $\{\text{Wor}(Charlie)$, $\text{Wor}(Diana)$, $\text{Cop}(Charlie)$, $\text{Wor}(Charlie) \land \text{Cop}(Charlie)\rightarrow \lnot\text{Ins}(Charlie) \}$, with consequences  $\Gamma(\B'') = \{ \text{Wor}(Charlie), \text{Wor}(Diana)$, $\text{Cop}(Charlie)$, $\lnot \text{Ins}(Charlie)\}$. Then, computing the measure of belief change we get $\fml{D}(\B, \B'') = \frac{4}{6} = 0.66$.
    \squishend
\end{example}

As seen in the above example, a non-minimal revision obviously yields a higher belief change. At a first glance, non-minimal revisions might seem counter-intuitive--why should an agent discard more beliefs than what is minimally necessary for consistency? Are not these beliefs irrelevant to the explanandum anyway? But as we will see in the next section, these are not irrelevant beliefs. In fact, our operator allows us to capture how people revise their beliefs: \textit{non-minimally with explanations as their guide}.

\section{Human Belief Revision: Empirical Findings}
\label{sec:empirical}

\begin{quote}
    \textit{Do the dynamics of human belief revision align with the principle of minimalism or with explanatory understanding?} 
\end{quote}
This is the main question we are investigating in this section. As described earlier, the explanatory understanding suggests that in resolving inconsistencies, people seek explanations rather than simple minimal edits to their beliefs. The explanations then entail the revision, which may not be minimal. This is in contrast to the principle of minimalism, the most common principle in belief revision theory to date, which presupposes that an agent's primary goal when resolving inconsistencies is to make minimal changes in their beliefs.

To carry out our investigation, we conducted two human-subject study experiments using three types of inconsistent problems commonly referenced in cognitive science literature \citep{elio1997belief,politzer2001belief,byrne2002contradictions}. These problems were selected for their relevance in testing the depth of belief revision in response to inconsistencies. 

The first type (\textbf{Type I}) consists of a conditional generalization statement $S_1$, a (non-conditional) ground categorical statement $S_2$, and a fact $F$ that is inconsistent with what the statements imply. For example:

%\begin{quote}
%    \textbf{Type I}
    \begin{itemize}
    \item $S_1$: \textit{If people are worried, then they find it difficult to concentrate.}
    \item $S_2$: \textit{Alice was worried.}
    \item $F$: \textit{In fact, Alice did not find it difficult to concentrate.}
    \end{itemize}
%\end{quote}

The second type (\textbf{Type II}) of inconsistency consists of two conditional generalization statements $S_1$ and $S_2$, a ground categorical statement $S_3$, and a fact $F$ that is inconsistent with the consequences of one of the conditional statements and the categorical statement. For example:
%\begin{quote}
%    \textbf{Type II}
    \begin{itemize}
    \item $S_1$: \textit{If people are worried, then they find it difficult to concentrate.}
    \item $S_2$: \textit{If people are worried, then they have insomnia.}
    \item  $S_3$: \textit{Alice was worried.}
    \item $F$: \textit{In fact, Alice did not find it difficult to concentrate.}
    \end{itemize}
%\end{quote}

Finally, the third type (\textbf{Type III}) of inconsistency consists of two conditional generalizations $S_1$ and $S_2$, a categorical statement $S_3$, and a fact $F$ that is inconsistent with the consequences of both conditional statements and the categorical statement. For example:
%\begin{quote}
%    \textbf{Type III}
    \begin{itemize}
    \item $S_1$: \textit{If people are worried, then they find it difficult to concentrate.}
    \item $S_2$: \textit{If people are worried, then they have insomnia.}
    \item  $S_3$: \textit{Alice was worried.}
    \item $F$: \textit{In fact, Alice did not find it difficult to concentrate and did not have insomnia.}
    \end{itemize}
%\end{quote}

In all three types, minimalism posits that a minimal resolution will be an explanation that rejects only the categorical statement (i.e., Alice was \textit{not} worried.). However, almost all conditional generalizations about events are susceptible to what psychologists refer to as \textit{disabling conditions} -- conditions describing how the conditional fails~\citep{dieussaert2000initial,elio1997belief,politzer2001belief}. For instance, ``\emph{Is it really the case that people find it difficult to concentrate when they are worried?}'' One can easily think of a disabling condition for this conditional, for example, ``\textit{people with effective coping strategies may still be able to concentrate despite being worried}''.

Because of people's propensity to envisage disabling conditions, their explanations are more likely to invoke such conditions than to imply that a categorical statement is wrong. But these explanations do not invoke a minimal change, because, logically speaking, they also remove the support for other consequences apart from the one giving rise to the inconsistency. For example, rejecting $S_1$ implies rejecting all of its groundings, which means you cannot infer that people find it difficult to concentrate if they are worried, for any instantiation of this rule (see also Example~\ref{ex:change-measure}) . Surely, this is not a minimal change.

The upcoming experiments aim to elucidate whether people's revisions follow minimalism or are more aligned with the explanatory understanding, providing some insights into human belief revision processes.

\begin{table*}[!ht]
\setlength{\tabcolsep}{4pt}
\centering
\small
\begin{tabular}{lccccc}
\toprule
\textbf{Problem Type} & \textbf{Total Valid Responses} & \textbf{Non-Minimal Explanation} & \textbf{Minimal Explanation} & \textbf{Wilcoxon test} & \textbf{Effect Size} \\
         & (Count)  &  (Count and \( \% \)) &  (Count and \( \% \)) & (\( p \)-value) &  (Cohen's \( d \))  \\
\midrule
\midrule
\textit{Type I} & 161 & 132 (81.99\%) & 29 (18.01\%) & \(4.76 \times 10^{-16}\) & 1.28  \\
\textit{Type II} & 161 & 140 (86.96\%) & 21 (13.04\%) & \(6.69 \times 10^{-21}\) & 1.48  \\
\textit{Type III} & 177 & 144 (81.36\%) & 33 (18.64\%) & \(7.23 \times 10^{-17}\) & 1.25  \\
\textit{Aggregate} & 499 & 416 (83.37\%) & 83 (16.63\%) & \(2.96 \times 10^{-50}\) & 1.33  \\
\bottomrule
\end{tabular}
\caption{Results from Experiment 1, with \textit{Aggregate} representing combined data from all problem types.}
\label{exp1}
\end{table*}

\begin{table*}[!t]
\setlength{\tabcolsep}{4pt}
\centering
\small
\begin{tabular}{lcccccc}
\toprule
        \textbf{Problem Type} &  \textbf{Total Valid Responses} &  \textbf{Non-Minimal Revision} & \textbf{\# of Changes}  &  \textbf{Minimal Revision} &  \textbf{Wilcoxon Test} &  \textbf{Effect Size} \\
         &  (Count)  &  (Count and \( \% \)) & (Avg.) &  (Count and \( \% \)) & (\( p \)-value) &  (Cohen's \( d \)) \\
\midrule
\midrule
         \textit{Type II} &              159 &          153 (96.23\%) & \(1.71\) &       6 (3.77\%) &          \(2.09 \times 10^{-31}\) & \(3.56\)  \\
        \textit{Type III} &              154 &          136 (88.31\%)& \(2.06\)   &       18 (11.69\%) &          \(1.93 \times 10^{-21}\) & \(1.89\) \\
      \textit{Aggregate} &              313 &          289 (92.33\%) & \(1.88\)  &       24 (7.67\%) &          \(1.01 \times 10^{-50}\) & \(2.43\)  \\
\bottomrule
\end{tabular}
\caption{Results from Experiment 2, with \textit{Aggregate} representing combined data from all problem types.}
\label{exp2}
\end{table*}

\subsection{Experiment 1}

Our first experiment looked at the three problem types described above and was aimed at providing some empirical data on what kinds of explanations do people seek in the face of inconsistencies. In other words, do people seek explanations that resolve categorical or conditional statements?

\xhdr{Participants and Design}
We recruited 62 participants from the online crowdsourcing platform Prolific \citep{palan2018prolific} across diverse demographics, with the only filter being that they are fluent in English. The participants carried out three different problems of each of the three types (Type I, Type II, and Type III), for a total of nine problems. The statements were taken from common, everyday events including subjects such as economics, intuitive physics, and psychology. The conditional statements in all problems were selected to be highly plausible and interpretable, similar to those in the high-plausibility category used by \citet{politzer2001belief}. All problem sets as well as more details of the study can be found in the appendix.

\noindent The participants' main task was to explain the inconsistencies presented to them, and we examined the revisions implied by their explanations. After providing their explanations for every problem, each participant was asked a question about how they approached explaining what was going on and if they followed any strategies when doing so. They also answered a Likert-type question about whether being provided an explanation will help them understand the inconsistency.

\xhdr{Results}
All participants came up easily with reasons to explain the inconsistencies they encountered. To analyze the results, we employed a coding scheme similar to that by \citet{byrne2002contradictions}. Explanations provided by the participants were categorized into two main types: (1)~Those implying non-minimal revisions (e.g., revisions to conditional generalizations), and (2)~those implying minimal revisions (e.g., revisions to categoricals). Explanations implying non-minimal revisions were either disabling conditions that would prevent the consequences of the generalization, or of the form ``It is not the case that if X then Y'', ``X is not sufficient for Y'', and other similar ones. Explanations that implied minimal revisions rejected the categorical statements and were of the form ``not X'', ``perhaps not X'', and so on.  This coding scheme classified $89\%$ of the responses. The remaining responses either affirmed or denied the new information, or were too vague to classify.

Table~\ref{exp1} illustrates the distribution of explanations implying either non-minimal or minimal revisions. The data reveal a compelling trend: an overwhelming majority of explanations across all questions leaned towards non-minimal revisions. A Wilcoxon test performed on the aggregated data yielded a $p$-value significantly smaller than 0.05 ($p \approx 2.96 \times 10^{-50}$), providing robust evidence that the observed proportions of non-minimal and minimal classifications are far from what would be expected by random chance. Specifically, non-minimal revisions were substantially more frequent than minimal revisions.

To probe the robustness of this finding, we conducted individual statistical tests for each question. Wilcoxon tests for each problem revealed $p$-values well below the $0.05$ threshold, affirming the prevalence of non-minimal revisions over minimal. Moreover, effect size measurements (Cohen's $d$) were conducted to quantify the magnitude of these differences, where it was consistently high across all instances. 

Collectively, these results offer empirical support for the prevalence of non-minimal revisions in participants' explanations. This inclination suggests that individuals engage deeply in resolving inconsistencies, often opting for more comprehensive explanatory frameworks that necessitate altering their existing beliefs to a greater extent than minimalism would predict. These results also provide insights into what kinds of explanations people tend to create.

\subsection{Experiment 2}

In this experiment we look at how people actually revise their beliefs when they are given an explanation for an inconsistency. 

\xhdr{Participants and Design}
We recruited $60$ participants from the Prolific platform with the same requirements as before. In this study, rather than having the participants generate their own explanations, they were presented with some of the most plausible explanations (that are disabling conditions) created by participants in Experiment 1, and then asked to describe how they would revise their information in light of the explanation. To ensure that they do not discard the explanations, we added some validity to the explanation by telling the participants that the explanation comes from a trustworthy source. Unlike in Experiment 1, however, the participants were only shown problems of Type II and III.

\xhdr{Results}
We employed a specific coding scheme to analyze how participants chose to revise their beliefs. In accordance with this scheme, participants indicated whether they would \textit{keep}, \textit{discard}, or \textit{alter} the beliefs.\footnote{We adopted a measure of belief change similar to those used in previous studies \cite{elio1997belief,harman1986change,walsh2009changing,khemlani2013cognitive}, which typically count the number of beliefs that change their values.}, and there does not exist another readily testable measure of belief change, we adopt a similar measure too. When choosing to alter a belief, participants were asked to provide details about how they would go about it. Like before, a minimal revision is one that discards or alters the categorical statement, and a non-minimal revision one that discards or alters either a generalization, or a combination of more than two statements. This coding scheme classified $87\%$ of the responses, while the remaining responses were either yielding inconsistent revisions (e.g., not revising anything) or too vague to classify.

Table~\ref{exp2} provides an overview of the results. The data reveal a clear trend: A significant majority of revisions were non-minimal across both problem types. In Type II problems, $96.23\%$ of the responses were non-minimal with an average of $1.65$ changes to beliefs, compared to $3.77\%$ that were minimal.  A Wilcoxon test produced a $p$-value of $2.09 \times 10^{-31}$, and the effect size $d$ was $3.56$. In Type III problems, $88.31\%$ were non-minimal and $11.69\%$ were minimal. The average number of changes were $1.82$. The Wilcoxon $p$-value was $1.93 \times 10^{-21}$, and the effect size $d$ was $2.06$. Moreover, aggregated data showed $92.33\%$ non-minimal revisions, with an average of $1.88$ belief changes, and $7.67\%$ minimal revisions. The Wilcoxon $p$-value was $1.01 \times 10^{-50}$ and the effect size $1.73$.

These findings not only strongly corroborate those of Experiment 1, but further solidify the evidence that people predominantly opt for non-minimal revisions when presented with explanations. Interestingly, even if we relax the assumption of what constitutes a minimal revision in our coding scheme, i.e., by supposing that it is discarding or altering one statement (conditional or generalization), these results clearly indicate that the average number of belief changes in people tend to be more than one. 

In summary, the results from both Experiment 1 and Experiment 2 offer compelling empirical data in the context of human belief revision processes. Far from making minimal changes to their existing beliefs, participants predominantly favored explanations that led to more comprehensive revisions. This demonstrates a natural inclination towards understanding the underlying factors that give rise to inconsistencies rather than merely resolving them in a superficial, yet minimal manner. The findings not only validate the explanatory understanding but also suggest a potential need for reevaluation of belief revision theories aimed at modeling actual humans. Our proposed human-aware belief revision framework is step towards this direction.

\section{Discussion and Conclusion}
\label{sec:discussion}

The field of belief revision theory has experienced remarkable progress, primarily influenced by the foundational work of Alchourrón, Makinson, and Gärdenfors. Their studies on revisions in legal codes \cite{alchourron1981hierarchies}, the introduction of rationality postulates for change operators \cite{gardenfors1982rules}, and the development of the \textit{AGM model} \cite{alchourron1985logic} have set the stage for subsequent advancements in the area. It is well known that one of the basic conceptual principles underlying the AGM model, as well as most belief revision frameworks, is the principle of minimalism. 

Contrary to the minimalist approach that has dominated belief revision theory, our empirical findings point towards a different paradigm--they suggest that in certain situations, individuals might opt for broader revisions to their belief systems, driven by the desire for a more comprehensive understanding and explanation of the information they encounter. In particular, our user studies revealed a notable propensity among participants to favor non-minimal, explanation-driven revisions when faced with inconsistencies. This preference persisted across different scenarios, suggesting that such an inclination might be a fundamental aspect of human reasoning. This finding aligns with the explanatory understanding, which proposes that humans prioritize generating coherent and plausible explanations over merely maintaining consistency with minimal changes.

We believe that these findings can have implications that extend to the burgeoning domains of explainable AI \citep{gunning2019xai} and human-aware AI \citep{kambhampati2020challenges}. As efforts in these fields converge towards fostering transparent, explainable, and synergistic interactions between humans and AI systems, aligning the AI systems' decision-making processes with human cognitive models can not only enhance explainability but also elevate the efficacy of human-AI collaborations.

A direct connection can be made with the Model Reconciliation Problem (MRP)~\citep{chakraborti2017plan,sreedharan2021foundations,son2021model,vas21,vasileioulogic}. Traditionally, MRP has been approached with an emphasis on minimal changes to the human user's (mental) model. However, our findings suggest that such human-aware AI systems might need to accommodate more substantial revisions to align more closely with human cognitive processes. Integrating our explanation-based belief revision framework into human-aware AI systems could lead to systems better equipped for effective interaction with humans, reflecting the complexity and depth of human belief revision. As AI systems progressively permeate society's decision-making structures, the imperative to align AI processes with human cognition becomes ever more pressing.

In conclusion, this paper contributes a new perspective to belief revision theory by introducing and empirically grounding a human-aware belief revision framework. Our empirical findings provide support for this framework, highlighting the importance of explanations in human belief revision processes, and the divergence from minimalism. Ultimately, belief revision is not an isolated process, but an integral component of people's broader quest for explanatory understanding.

\bibliography{aaai25}

\begin{thebibliography}{37}
\providecommand{\natexlab}[1]{#1}

\bibitem[{Alchourr{\'o}n, G{\"a}rdenfors, and
  Makinson(1985)}]{alchourron1985logic}
Alchourr{\'o}n, C.~E.; G{\"a}rdenfors, P.; and Makinson, D. 1985.
\newblock On the logic of theory change: Partial meet contraction and revision
  functions.
\newblock \emph{The journal of symbolic logic}, 50(2): 510--530.

\bibitem[{Alchourr{\'o}n and Makinson(1981)}]{alchourron1981hierarchies}
Alchourr{\'o}n, C.~E.; and Makinson, D. 1981.
\newblock Hierarchies of regulations and their logic.
\newblock In \emph{New studies in deontic logic: Norms, actions, and the
  foundations of ethics}, 125--148. Springer.

\bibitem[{Byrne and Walsh(2002)}]{byrne2002contradictions}
Byrne, R.~M.; and Walsh, C.~R. 2002.
\newblock Contradictions and counterfactuals: Generating belief revisions in
  conditional inference.
\newblock In \emph{Proceedings of the Annual Meeting of the Cognitive Science
  Society}, volume~24.

\bibitem[{Carroll and Olson(1988)}]{carroll1988mental}
Carroll, J.~M.; and Olson, J.~R. 1988.
\newblock Mental models in human-computer interaction.
\newblock \emph{Handbook of human-computer interaction}, 45--65.

\bibitem[{Chakraborti et~al.(2017)Chakraborti, Sreedharan, Zhang, and
  Kambhampati}]{chakraborti2017plan}
Chakraborti, T.; Sreedharan, S.; Zhang, Y.; and Kambhampati, S. 2017.
\newblock Plan Explanations as Model Reconciliation: Moving Beyond Explanation
  as Soliloquy.
\newblock In \emph{Proceedings of the International Joint Conference on
  Artificial Intelligence (IJCAI)}, 156--163.

\bibitem[{Chi et~al.(1994)Chi, De~Leeuw, Chiu, and
  LaVancher}]{chi1994eliciting}
Chi, M.~T.; De~Leeuw, N.; Chiu, M.-H.; and LaVancher, C. 1994.
\newblock Eliciting self-explanations improves understanding.
\newblock \emph{Cognitive science}, 18(3): 439--477.

\bibitem[{Craik(1943)}]{craiknature}
Craik, K. J.~W. 1943.
\newblock \emph{The nature of explanation}.

\bibitem[{Dieussaert et~al.(2000)Dieussaert, Schaeken, De~Neys, and
  d'Ydewalle}]{dieussaert2000initial}
Dieussaert, K.; Schaeken, W.; De~Neys, W.; and d'Ydewalle, G. 2000.
\newblock Initial belief state as a predictor of belief revision.
\newblock \emph{Cahiers de Psychologie Cognitive/Current Psychology of
  Cognition}.

\bibitem[{Elio(2022)}]{elio2022disbelieve}
Elio, R. 2022.
\newblock How to disbelieve p→ q: Resolving contradictions.
\newblock In \emph{Proceedings of the twentieth annual conference of the
  cognitive science society}, 315--320. Routledge.

\bibitem[{Elio and Pelletier(1997)}]{elio1997belief}
Elio, R.; and Pelletier, F.~J. 1997.
\newblock Belief change as propositional update.
\newblock \emph{Cognitive Science}, 21(4): 419--460.

\bibitem[{Falappa, Kern-Isberner, and Simari(2002)}]{falappa2002explanations}
Falappa, M.~A.; Kern-Isberner, G.; and Simari, G.~R. 2002.
\newblock Explanations, belief revision and defeasible reasoning.
\newblock \emph{Artificial Intelligence}, 141(1-2): 1--28.

\bibitem[{Ferm{\'e} and Hansson(2018)}]{ferme2018belief}
Ferm{\'e}, E.; and Hansson, S.~O. 2018.
\newblock \emph{Belief change: introduction and overview}.
\newblock Springer.

\bibitem[{G{\"a}rdenfors(1982)}]{gardenfors1982rules}
G{\"a}rdenfors, P. 1982.
\newblock Rules for rational changes of belief.
\newblock \emph{Philosophical Essays dedicated to Lennart Aqvist on his
  fiftieth birthday}, 34: 88--101.

\bibitem[{G{\"a}rdenfors(1984)}]{gardenfors1984epistemic}
G{\"a}rdenfors, P. 1984.
\newblock Epistemic importance and minimal changes of belief.
\newblock \emph{Australasian Journal of Philosophy}, 62(2): 136--157.

\bibitem[{G{\"a}rdenfors(1988)}]{gardenfors1988knowledge}
G{\"a}rdenfors, P. 1988.
\newblock \emph{Knowledge in flux: Modeling the dynamics of epistemic states.}
\newblock The MIT press.

\bibitem[{Gunning et~al.(2019)Gunning, Stefik, Choi, Miller, Stumpf, and
  Yang}]{gunning2019xai}
Gunning, D.; Stefik, M.; Choi, J.; Miller, T.; Stumpf, S.; and Yang, G.-Z.
  2019.
\newblock XAI—Explainable artificial intelligence.
\newblock \emph{Science robotics}, 4(37): eaay7120.

\bibitem[{Hansson(1991)}]{hansson1991belief}
Hansson, S.~O. 1991.
\newblock Belief contraction without recovery.
\newblock \emph{Studia logica}, 50: 251--260.

\bibitem[{Hansson(1994)}]{hansson1994kernel}
Hansson, S.~O. 1994.
\newblock Kernel contraction.
\newblock \emph{The Journal of Symbolic Logic}, 59(3): 845--859.

\bibitem[{Hansson(1999)}]{hansson1999survey}
Hansson, S.~O. 1999.
\newblock A survey of non-prioritized belief revision.
\newblock \emph{Erkenntnis}, 50(2-3): 413--427.

\bibitem[{Harman(1986)}]{harman1986change}
Harman, G. 1986.
\newblock \emph{Change in view: Principles of reasoning.}
\newblock The MIT Press.

\bibitem[{James(1907)}]{james1907pragmatism}
James, W. 1907.
\newblock Pragmatism's conception of truth.
\newblock \emph{The journal of philosophy, psychology and scientific methods},
  4(6): 141--155.

\bibitem[{Johnson-Laird, Girotto, and Legrenzi(2004)}]{johnson2004reasoning}
Johnson-Laird, P.~N.; Girotto, V.; and Legrenzi, P. 2004.
\newblock Reasoning from inconsistency to consistency.
\newblock \emph{Psychological Review}, 111(3): 640.

\bibitem[{Kambhampati(2020)}]{kambhampati2020challenges}
Kambhampati, S. 2020.
\newblock Challenges of human-aware ai systems: Aaai presidential address.
\newblock \emph{AI Magazine}, 41(3): 3--17.

\bibitem[{Keil(2006)}]{keil2006explanation}
Keil, F.~C. 2006.
\newblock Explanation and understanding.
\newblock \emph{Annu. Rev. Psychol.}, 57: 227--254.

\bibitem[{Kern-Isberner(2002)}]{kern2002principle}
Kern-Isberner, G. 2002.
\newblock The principle of conditional preservation in belief revision.
\newblock In \emph{International Symposium on Foundations of Information and
  Knowledge Systems}, 105--129. Springer.

\bibitem[{Khemlani and Johnson-Laird(2013)}]{khemlani2013cognitive}
Khemlani, S.; and Johnson-Laird, P. 2013.
\newblock Cognitive changes from explanations.
\newblock \emph{Journal of Cognitive Psychology}, 25(2): 139--146.

\bibitem[{Lombrozo(2007)}]{lombrozo2007simplicity}
Lombrozo, T. 2007.
\newblock Simplicity and probability in causal explanation.
\newblock \emph{Cognitive psychology}, 55(3): 232--257.

\bibitem[{Palan and Schitter(2018)}]{palan2018prolific}
Palan, S.; and Schitter, C. 2018.
\newblock Prolific: A subject pool for online experiments.
\newblock \emph{Journal of Behavioral and Experimental Finance}, 17: 22--27.

\bibitem[{Parkes(1997)}]{parkes1997clustering}
Parkes, A.~J. 1997.
\newblock Clustering at the phase transition.
\newblock 340--345.

\bibitem[{Politzer and Carles(2001)}]{politzer2001belief}
Politzer, G.; and Carles, L. 2001.
\newblock Belief revision and uncertain reasoning.
\newblock \emph{Thinking \& Reasoning}, 7(3): 217--234.

\bibitem[{Rott(2000)}]{rott2000two}
Rott, H. 2000.
\newblock Two dogmas of belief revision.
\newblock \emph{The Journal of Philosophy}, 97(9): 503--522.

\bibitem[{Son et~al.(2021)Son, Nguyen, Vasileiou, and Yeoh}]{son2021model}
Son, T.~C.; Nguyen, V.; Vasileiou, S.~L.; and Yeoh, W. 2021.
\newblock Model reconciliation in logic programs.
\newblock In \emph{European Conference on Logics in Artificial Intelligence
  (JELIA)}, 393--406.

\bibitem[{Sreedharan, Chakraborti, and
  Kambhampati(2021)}]{sreedharan2021foundations}
Sreedharan, S.; Chakraborti, T.; and Kambhampati, S. 2021.
\newblock Foundations of explanations as model reconciliation.
\newblock \emph{Artificial Intelligence}, 301: 103558.

\bibitem[{Thagard(1989)}]{thagard1989explanatory}
Thagard, P. 1989.
\newblock Explanatory coherence.
\newblock \emph{Behavioral and brain sciences}, 12(3): 435--467.

\bibitem[{Vasileiou, Previti, and Yeoh(2021)}]{vas21}
Vasileiou, S.~L.; Previti, A.; and Yeoh, W. 2021.
\newblock On Exploiting Hitting Sets for Model Reconciliation.
\newblock In \emph{Proceedings of the AAAI Conference on Artificial
  Intelligence (AAAI)}, 6514--6521.

\bibitem[{Vasileiou et~al.(2022)Vasileiou, Yeoh, Son, Kumar, Cashmore, and
  Magazzeni}]{vasileioulogic}
Vasileiou, S.~L.; Yeoh, W.; Son, T.~C.; Kumar, A.; Cashmore, M.; and Magazzeni,
  D. 2022.
\newblock A Logic-based Explanation Generation Framework for Classical and
  Hybrid Planning Problems.
\newblock \emph{Journal of Artificial Intelligence Research}, 73: 1473--1534.

\bibitem[{Walsh and Johnson-Laird(2009)}]{walsh2009changing}
Walsh, C.~R.; and Johnson-Laird, P. 2009.
\newblock Changing your mind.
\newblock \emph{Memory \& Cognition}, 37(5): 624--631.

\end{thebibliography}

\clearpage

\section*{{\LARGE Appendix}}

\section{Rationality Postulates and Axiomatization of Explanation-guided Revision Operator}
\label{postulates}

Following the tradition in belief revision literature \cite{gardenfors1982rules,falappa2002explanations,ferme2018belief}, we now present some basic rationality postulates for the explanation-guided belief revision. Rationality postulates are fundamental guidelines dictating the behavior of a revision operator--they specify what the operator's response should be, when provided with certain inputs, but not its internal mechanism.\footnote{The internal mechanism is specified in the construction of the operator we presented in the previous section.} 

We consider the following postulates:

\begin{itemize}

    \item \emph{Inclusion}: $\B \odot E \subseteq \B \cup E$.\\
    This postulate establishes that if the agent revises its belief base $\B$ with the explanation $E$, then the new belief base will stem from the union of $\B$ and $E$.

    \item \emph{Vacuity}: If $\B \cup E \not \models \bot$, then $\B \odot E = \B \cup E$.\\
    This postulate establishes that if the belief base $\B$ is consistent with the explanation $E$, then the revision simply expands $\B$ with $E$.

    \item \emph{Consistency}: If $\B \cup E \models \bot$, then $\B \odot E \not \models \bot$.\\
    This postulate ensures that the revised belief base is consistent.

    % \item \emph{Core Retainment}: If $\alpha \in (\B \cup E) \setminus \B \odot E$, then $\exists \B'$ such that $\B' \subseteq (\B \cup E)$, $\B' \not \models \bot$ and $\B' \cup \{\alpha\} \models \bot$.\\
    % This postulate states that nothing is retracted from the union of the belief base and the explanation unless its retraction in some way makes the revised belief base consistent.

    \item \textit{Reversion:} If $(\B \cup E)^\bot = (B \cup E')^\bot$, then $(B \cup E) \setminus (\B \odot E) = (B \cup E') \setminus (\B \odot E')$.

    This postulates states that if, if $\B \cup E$ and $\B \cup E'$ have the same correction kernels, then the beliefs retracted in the respective revisions with respect to $E$ and $E'$ are the same.

    \item \emph{Constrained Acceptance}: If $\B \not \models \neg \varphi$, then $\B \odot E \models \varphi$. \\
    This postulate establishes that if the agent receives an explanation $E$ for an explanandum $\varphi$ that is not rejected in its belief base $\B$, then $\varphi$ will be accepted in its revised belief base.

    \item \emph{Unconstrained Acceptance}: If $\B \models \neg \varphi$, then $\B \odot E \models \varphi$.\\
     This postulate establishes that if the agent receives an explanation $E$ for an explanandum $\varphi$ that is rejected in its belief base $\B$, then $\varphi$ will be accepted in its revised belief base.

    \item \emph{Strong Acceptance}: $\B \odot E \models \varphi$.\\
    This postulate ensures that the explanandum $\varphi$ can be inferred from the revised belief base.

\end{itemize}

Two obvious relations between the postulates follow.

\begin{proposition}
    If the operator $\odot$ satisfies vacuity, then it satisfies consistency and strong acceptance. 
\end{proposition}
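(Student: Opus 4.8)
The claim has a hidden subtlety: \emph{vacuity} is a conditional statement (``if $\B \cup E \not\models \bot$, then \ldots''), so on its face it says nothing when $\B \cup E \models \bot$, which is precisely the interesting case for consistency. The plan is therefore to first argue that, \emph{given the construction of $\odot$ from a correction kernel and a $\varphi$-preserving selection function}, vacuity forces enough structure to conclude consistency and strong acceptance in all cases. Concretely, I would split on whether $\B \cup E$ is consistent.

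First, the easy branch: if $\B \cup E \not\models \bot$, then by vacuity $\B \odot E = \B \cup E$. Consistency is then immediate since $\B \cup E \not\models \bot$. For strong acceptance, recall that $E$ is an explanation for $\varphi$, so $E \models \varphi$ by Definition~\ref{def:explanation}, and since $E \subseteq \B \cup E = \B \odot E$, monotonicity of $\models$ gives $\B \odot E \models \varphi$. So both properties hold on this branch regardless of vacuity — vacuity is only needed to pin down what $\B \odot E$ equals.

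Second, the branch $\B \cup E \models \bot$. Here vacuity is vacuously true, so the real work is to observe that the \emph{definition} of $\odot$ already delivers what we need: $\B \odot E = (\B \cup E) \setminus \Sigma((\B \cup E)^\bot)$, where $\Sigma((\B \cup E)^\bot) \in (\B \cup E)^\bot$ is a correction set, and by the definition of the correction kernel, $(\B \cup E) \setminus \Sigma((\B \cup E)^\bot) \not\models \bot$ — this is consistency. For strong acceptance, invoke the defining condition (2) of the $\varphi$-preserving selection function, which states exactly $(\B \cup E) \setminus \Sigma((\B \cup E)^\bot) \models \varphi$, i.e., $\B \odot E \models \varphi$. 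The main thing to flag is that on this branch the conclusions follow from the construction, not from the vacuity postulate per se; I would present the proposition as saying that any $\odot$ arising from the construction (which automatically satisfies vacuity) satisfies consistency and strong acceptance, and note that the postulate-level reading is what the authors intend.

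\textbf{Main obstacle.} The only real obstacle is interpretive rather than technical: deciding whether the proposition is meant as a purely syntactic implication among the three postulates (in which case it is false as literally stated, since vacuity alone says nothing about the inconsistent case) or as a statement about the concrete operator $\odot$ defined via correction kernels and $\varphi$-preserving selection functions (in which case it holds trivially, and vacuity plays essentially no role). I would resolve this by writing the proof against the constructed operator, citing the correction-kernel definition for consistency and condition~(2) of the selection function for strong acceptance, and using vacuity only to evaluate $\B \odot E$ in the consistent case. No nontrivial calculation is required; the proof is a two-case unpacking of definitions.
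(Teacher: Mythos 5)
Your proof is correct, and it is in fact more careful than the paper's own argument. The paper's proof reads: ``From vacuity, we have that $\B \cup E \not\models \bot$ and $\B \odot E = \B \cup E$,'' and then derives consistency and strong acceptance from there --- i.e., it silently treats the antecedent of the vacuity postulate as if it were asserted, and thus only ever handles the branch $\B \cup E \not\models \bot$. On that branch your argument coincides with the paper's (consistency is immediate, and strong acceptance follows from $E \models \varphi$ plus monotonicity). Where you genuinely diverge is in recognizing that the inconsistent branch is the one that matters for the consistency postulate, that vacuity says nothing there, and that the conclusion must instead be extracted from the construction of $\odot$ (the correction-kernel condition $(\B \cup E)\setminus\B' \not\models \bot$ for consistency, and condition (2) of the $\varphi$-preserving selection function for strong acceptance). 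Your interpretive diagnosis is the right one: as a purely syntactic implication among postulates the proposition does not hold, and it is only true when read against the constructed operator --- at which point vacuity is doing essentially no work, since the constructed operator satisfies consistency and strong acceptance unconditionally (this is exactly what the paper's own Theorem~1, construction-to-postulates direction, establishes). The one presentational caveat is that your proof proves something slightly different from what the proposition literally asserts; you flag this explicitly, which is the honest way to handle it, but be aware that a reader expecting a postulate-to-postulate derivation will not find one, because none exists.
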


\begin{proof}
    From vacuity, we have that $\B \cup E \not \models \bot$ and $\B \odot E = \B \cup E$. It follows then that $\B \odot E \not \models \bot$, satisfying consistency. Now, from the definition of an explanation we have that $E \models \varphi$, and from the monotonicity of classical logic we have that $\B \odot E \models \varphi$. Therefore, strong acceptance is satisfied. 
\end{proof}

\begin{proposition}
    If the operator $\odot$ satisfies strong acceptance, then it satisfies constrained acceptance and unconstrained acceptance.
\end{proposition}

\begin{proof}
    From strong acceptance, $\B \odot E \models \varphi$, thus automatically satisfying constrained and unconstrained acceptance.
\end{proof}

Finally, an axiomatic characterization of the explanation-guided belief revision operator is as follows.

\begin{theorem}
Let $\B$ be a belief base and $E$ an explanation for explanandum $\varphi$. The operator $\odot$ is an explanation-guided belief revision operator on $\B$ with $E$ if and only if it satisfies inclusion, consistency, reversion, and strong acceptance.
\end{theorem}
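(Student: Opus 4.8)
The plan is to prove the biconditional by establishing the two directions separately, following the standard ``representation theorem'' template used for kernel revision operators.

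\medskip

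\noindent\textbf{Soundness direction ($\Rightarrow$).} First I would assume $\odot$ is constructed as in the definition, i.e. $\B \odot E = (\B \cup E) \setminus \Sigma((\B \cup E)^\bot)$ for some $\varphi$-preserving selection function $\Sigma$, and verify the four postulates in turn. \emph{Inclusion} is immediate since $\odot$ only removes beliefs from $\B \cup E$. \emph{Consistency} follows by cases: if $\B \cup E \not\models \bot$ then $(\B\cup E)^\bot = \B\cup E$ and $\Sigma$ returns $\emptyset$ (as noted after the selection-function definition), so $\B\odot E = \B\cup E \not\models\bot$; if $\B\cup E \models \bot$ then $\Sigma((\B\cup E)^\bot)$ is a correction set, whose removal renders the base consistent by definition of the correction kernel. \emph{Reversion} is essentially by construction: $\Sigma$ takes the correction kernel as its sole argument, so equal correction kernels $(\B\cup E)^\bot = (\B\cup E')^\bot$ force $\Sigma$ to return the same set, whence the retracted beliefs $(\B\cup E)\setminus(\B\odot E) = \Sigma((\B\cup E)^\bot) = \Sigma((\B\cup E')^\bot) = (\B\cup E')\setminus(\B\odot E')$ coincide. \emph{Strong acceptance} is precisely condition~(2) in the definition of a $\varphi$-preserving selection function: $(\B\cup E)\setminus\Sigma((\B\cup E)^\bot) \models \varphi$ (and when $\B\cup E$ is consistent, $\B\odot E = \B\cup E \supseteq E \models \varphi$ by monotonicity).

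\medskip

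\noindent\textbf{Completeness direction ($\Leftarrow$).} Conversely, suppose $\odot$ is \emph{any} operator satisfying inclusion, consistency, reversion, and strong acceptance; I must exhibit a $\varphi$-preserving selection function $\Sigma$ witnessing that $\odot$ has the required form. The natural candidate is to define, for the correction kernel $K = (\B\cup E)^\bot$ arising from a given input, $\Sigma(K) := (\B\cup E)\setminus(\B\odot E)$. I then need to check three things: (i) $\Sigma$ is \emph{well-defined} as a function of $K$ alone --- this is exactly where \emph{reversion} is used, since it guarantees that whenever two inputs produce the same correction kernel they produce the same retracted set, so the value of $\Sigma$ does not depend on the particular $\B, E$ chosen; (ii) $\Sigma(K) \in K$, i.e. $\Sigma(K)$ is a genuine correction set --- here \emph{inclusion} gives $\Sigma(K) \subseteq \B\cup E$, \emph{consistency} gives $(\B\cup E)\setminus\Sigma(K) = \B\odot E \not\models\bot$, and the non-emptiness clause of the correction kernel needs the (degenerate) case analysis on whether $\B\cup E$ is consistent; (iii) $(\B\cup E)\setminus\Sigma(K) \models \varphi$, which is immediate from \emph{strong acceptance}. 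Finally, unwinding the definition gives $(\B\cup E)\setminus\Sigma((\B\cup E)^\bot) = (\B\cup E)\setminus((\B\cup E)\setminus(\B\odot E)) = \B\odot E$, using inclusion once more to cancel the double set-difference, so $\odot$ is indeed the explanation-guided operator induced by $\Sigma$.

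\medskip

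\noindent\textbf{Main obstacle.} The routine postulate-checking is straightforward; the delicate part is the completeness direction, specifically verifying that the candidate $\Sigma$ is a legitimate $\varphi$-preserving selection function \emph{and} that it is a total function on the right domain. Two subtleties need care: the well-definedness argument must be phrased carefully because $\Sigma$'s domain is $2^{2^{\fml L}}$, so in principle a single correction-kernel object could arise from different $(\B,E)$ pairs, and reversion is exactly the postulate that rules out any inconsistency there; and the consistent case $\B\cup E\not\models\bot$ must be handled separately throughout (both directions), since then the correction kernel degenerates to $\B\cup E$ itself rather than a family of proper subsets, and one must confirm that $\Sigma$ returns $\emptyset$ and that $\B\odot E = \B\cup E$ is forced by inclusion together with vacuity-like reasoning --- though note vacuity is \emph{not} among the assumed postulates, so this must instead be extracted from consistency plus the structure of the correction kernel, or the theorem statement implicitly intends the nontrivial case. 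I would flag this degenerate case explicitly and dispatch it first.
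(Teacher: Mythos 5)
Your proposal is correct and follows essentially the same route as the paper's proof: postulate verification in the soundness direction, and in the completeness direction defining $\Sigma((\B\cup E)^\bot) := (\B\cup E)\setminus(\B\odot E)$, using reversion for well-definedness, inclusion together with consistency for membership in the correction kernel, and strong acceptance for $\varphi$-preservation. Your explicit handling of the degenerate case $\B\cup E\not\models\bot$ (where the correction kernel collapses to $\B\cup E$ and $\Sigma$ must return $\emptyset$) is in fact more careful than the paper's, which leaves that case implicit.
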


\begin{proof}
We first prove in the direction of construction to postulates. Let $\B\odot E = (\B\cup E) \setminus \Sigma((\B\cup E)^\bot)$:
\begin{itemize}
    \item  \textit{Inclusion}: Since $\B \odot E = (\B \cup E) \setminus \Sigma((\B \cup E)^\bot)$, it follows that $\B \odot E \subseteq \B \cup E$.
    
    \item  \textit{Consistency}: If $\B \cup E \models \bot$, then the operator retracts a correction set $\Sigma((\B \cup E)^\bot)$ from $\B \cup E$, thus rendering it consistent.

    \item \textit{Reversion}: Suppose that $(\B \cup E)^\bot = (B \cup E')^\bot$. Because $\Sigma$ is well-defined we have that $\Sigma((B_A \cup E)^\bot)= \Sigma((B_A \cup E')^\bot)$. Now, if $\alpha \in (\B \cup E)\setminus \B \odot E$, then $\alpha \in \Sigma((B_A \cup E)$, and consequently, $\alpha \in \B \cup E'$ and $\alpha \not \in \B \odot E'$. Hence, $(\B \cup E)\setminus \B \odot E \subseteq (\B \cup E')\setminus \B \odot E'$. Similarly, starting with $\alpha \in (\B \cup E')\setminus \B \odot E'$, we get that $(\B \cup E')\setminus \B \odot E' \subseteq (\B \cup E)\setminus \B \odot E$, thereby satisfying reversion.

    \item \textit{Strong Acceptance}: Since $E \models \varphi$, and since the $\varphi$-preserving selection function $\Sigma((\B\cup E)^\bot)$ selects sets that do not affect the entailment of $\varphi$ in $\B \cup E$, it follows that $(\B \cup E) \setminus \Sigma((\B\cup E)^\bot) \models \varphi$, thus satisfying strong acceptance. 
    
\end{itemize}
We now prove in the reverse direction from postulates to construction. 
%[Postulates to Construction]\\
We need show that if a belief revision operator satisfies the postulates then it is possible to build the explanation-guided belief revision operator. Let $\Sigma$ be a function such that for every pair of belief bases $\B$ and $E$, it holds that: $\Sigma((\B \cup E)^\bot)= \{\alpha \: : \: \alpha \in (\B \cup E) \setminus \B \odot E \}$. We must show that:
\begin{enumerate}
    \item $\Sigma$ is well-defined.\\
    That is, if $E$ and $E'$ are belief bases such that $(\B \cup E)^\bot = (\B \cup E')^\bot$, we will show that $\Sigma((\B \cup E)^\bot) = \Sigma((\B \cup E')^\bot)$. Since $E$ and $E'$ have the same correction kernels, it follows from reversion that $(\B \cup E) \setminus \B \odot E = (\B \cup E') \setminus \B \odot E'$. Therefore, $\Sigma((\B \cup E)^\bot) = \{\alpha \: : \: \alpha \in (\B \cup E) \setminus \B \odot E \} = \{\alpha \: : \: \alpha \in (\B \cup E') \setminus \B \odot E' \} = \Sigma((\B \cup E')^\bot)$. Hence, $\Sigma$ is well-defined.

    \item $\Sigma((\B \cup E)^\bot) \in (\B \cup E)^\bot$.\\
    Let $\alpha \in \Sigma((\B \cup E)^\bot)$. Then, $\alpha \in (\B \cup E) \setminus \B \odot E$. Since $\alpha \in (\B \cup E)$ and $\alpha \not \in \B \odot E$, there is a correction set $\B' \subseteq \B \cup E$ such that $\alpha \in \B'$. Therefore, $\alpha \in (\B \cup E)^\bot$.

    \item $(\B \cup E) \setminus \Sigma((\B \cup E)^\bot) \models \varphi$.
    Follows from the strong acceptance postulate $\B \odot E \models \varphi$.
\end{enumerate}

From the inclusion postulate and the definition of $\Sigma((\B \cup E)^\bot)$, we can conclude that the operator is an explanation-guided belief revision operator.
\end{proof}

\section{Illustrative Example with Existing Belief Revision Operator}

A reader familiar with the belief revision literature may wonder what is the difference between the explanation-guided belief revision operator we presented in this section and existing belief base revision frameworks. First, we want to note that our framework is specifically intended for human-aware AI settings. Now, closest to our operator is the belief revision operator proposed by \citet{falappa2002explanations}, which considers the revision of belief bases with explanations (or sets of formulae). The following example is aimed at illustrating the primary differences between the two approaches.

The belief revision operator of \citet{falappa2002explanations}, referred to by the authors as kernel revision by a set of sentences, is defined as follows:

\begin{definition}[\citet{falappa2002explanations} Revision Operator]
Let $\B$ be a belief base and $E$ an explanation. The kernel revision by a set of sentences $\diamond_F$ is defined as $\B \diamond_F E = (\B \cup E) \setminus \sigma((\B \cup E)^{\ind} \bot) $, where $(\B \cup E)^{\ind} \bot$ is a kernel set \cite{hansson1994kernel} (e.g., minimal unsatisfiable subsets of $\B \cup E$), and $\sigma((\B \cup E)^{\ind} \bot)$ an incision function such that $\sigma((\B \cup E)^{\ind} \bot) \subseteq \bigcup (\B \cup E)^{\ind} $ and if $X\in(\B \cup E)^{\ind}$ and $X \not = \emptyset$ then $X \cap \sigma((\B \cup E)^{\ind} \bot) = \emptyset$.
\end{definition}

First, we see from the above definition that the revision by \citet{falappa2002explanations} does not satisfy explanatory understanding, i.e., it makes it possible for the revised belief base to not entail the explanandum. Second, it follows the principle of by minimalism. Let us illustrate the differences in the following example.

Let $w_c$ be $\text{Wor}(Charlie)$, $w_d$ be $\text{Wor}(Diana)$, $c_c$ be $\text{Cop}(Charlie)$, $i_c$ be $\text{Ins}(Charlie)$, and $i_d$ be $\text{Ins}(Diana)$. 
    
Consider the belief base: 
$$\B = \{ w_c, w_d, w_c \rightarrow i_c, w_d \rightarrow i_d\},$$ 

which has consequences $\Gamma(\B) = \{ w_c, w_d, i_c, i_d \}$. Now, assume the following explanation for explanandum $\lnot i_c$:

$$\fml{E} = \{  w_d, c_c, w_c \land c_c \rightarrow \lnot i_c \}. $$

Using the framework by \citet{falappa2002explanations} to revise $\B$ with $\fml{E}$, we go through the following steps: 
  \begin{enumerate}
      \item \textbf{Kernel Set:} $(\B \cup \
\fml{E})^{\ind} \bot =\{ w_c, c_c, w_c \rightarrow i_c, w_c \land c_c \rightarrow \lnot i_c  \}\}$.

\item \textbf{Incision Function:} Possible results of $\sigma((\B \cup E)^{\ind} \bot)$ are: $\{w_c\}$, $\{  w_c \rightarrow i_c\}$ $\{ w_c \land c_c \rightarrow \lnot i_c\}$, $\{c_c, w_c \land c_c \rightarrow \lnot i_c\}$, and so on.
  \end{enumerate}  
    
What is important to highlight here are two things: (1) \textit{The incision function only selects subsets to retract from the minimal unsatisfable set, i.e., only those beliefs deemed ``relevant'' wrt minimality}; and (2) \textit{It is possible for the revised belief base to reject the explanandum}, i.e., $\B \diamond_F \fml{E} \not \models \lnot i_c$. 

In contrast, our operator works differently. First, it ensures that the revised belief base always entails the explanandum, i.e., $\B \odot E \models \lnot i_c$, in all possible results of the operator, while also not constrained to choose beliefs based on a minimality criterion. Importantly, notice how $w_d \rightarrow i_d \in \B$ will never be in any of the possible solutions of Falappa et al.'s operator, as it is not included in any of the kernels in $(\B \cup E)^{\ind} \bot$. However, in our case $w_d \rightarrow i_d \in (\B \cup E)^\bot$, and as such, it is possible for it to be in the solution of our operator. Again, while this might seem as an ``irrelevant'' revision that should be avoided (i.e., violates the minimal change principle), our empirical findings strongly support it, namely that given an explanation of an inconsistency, people, more frequently than not, make non-minimal revisions to their beliefs.

Finally, it is very important to note that we do not wish to replace any of the existing belief revision operators from the literature. Each has its own suitability and application. What we wish to do is present a framework that is tailored to human belief revision processes. That is, our operator is intended for human-aware AI settings.

\section{Human-Subject Experiments}

\subsection{Experiment 1}

Our first experiment looked at the three problem types described above and was aimed at providing some empirical data on whether people seek explanations that tend to invoke non-minimal changes to their beliefs, that is, do people seek explanations that resolve categorical or conditional propositions? The participants main task was to explain the inconsistencies presented to them, and we examined the revisions implied by their explanations.

\subsubsection{\textbf{Participants and Design}}

We recruited 62 participants from the online crowdsourcing platform Prolific \citep{palan2018prolific} across diverse demographics, with the only filter being that they are fluent in English. The participants carried out three different problems of each of three types (Type I, Type II, and Type III). The propositions were taken from common, everyday events including subjects such as economics and psychology. The conditional propositions in all problems were selected to be highly plausible and interpretable, similar to those in the high-plausibility category used by \citet{politzer2001belief}.

Each participant was given the following instructions: 
\begin{quote}
\textit{You will be presented with a series of everyday common scenarios. In each scenario, you will be presented with information from two or three different speakers talking about some specific things. You will then be given some additional information that you know, for a fact, to be true. Your task, in essence, is to explain what is going on.}
\end{quote}

We gave the following instructions to all participants:

\begin{itemize}
    \item \textit{\textbf{Read carefully:} For each scenario, read all the information very carefully.}
    \item \textit{\textbf{Explain}: Think about how to explain the fact. In other words, ask yourself: why does the fact conflict with the information provided by the speakers? Answer in your own words.}
    \item \textit{\textbf{No Right or Wrong Answers}: This study aims to understand your personal thought process. There are no right or wrong answers. Choose what feels most accurate to you.}
    \item \textit{\textbf{Pace Yourself}: While there's no strict time limit, try to spend a reasonable amount of time on each scenario—neither rushing through nor overthinking too much.}

\end{itemize}

\begin{figure*}[!ht]
    \centering
    \includegraphics[width=0.9\textwidth]{./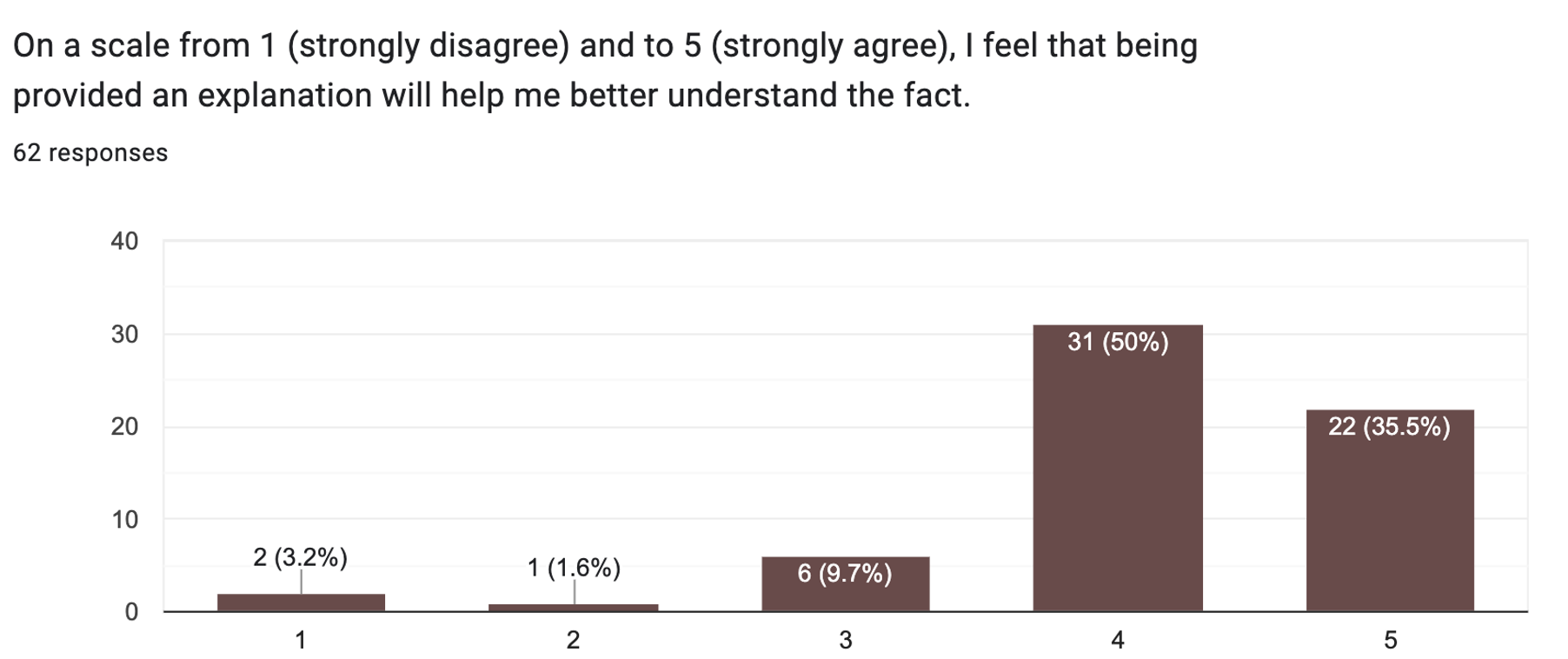}
    \caption{Distribution of Responses to Likert-type Question.}
    \label{fig:enter-label}
\end{figure*}

Afterwards, the participants saw the following scenarios and questions:\\

\noindent \textbf{Scenario 1:}
\begin{itemize}
    \item \textbf{$S_1$}: \textit{If a drink contains sugar, then it gives you energy.}
    \item \textbf{$S_2$}: \textit{This drink contains sugar.}
    \item \textbf{Fact}: \textit{In fact, it doesn't give you energy.}
\end{itemize}

\textit{Why does the drink not give you energy?}

\medskip \noindent \textbf{Scenario 2:}
\begin{itemize}
    \item \textbf{$S_1$}: \textit{If sales go up, then profits improve.}
    \item \textbf{$S_2$}: \textit{The sales went up.}
    \item \textbf{Fact}: \textit{In fact, the profits did not go up.}
\end{itemize}

\textit{Why did the sales not go up?}

\medskip \noindent \textbf{Scenario 3:}
\begin{itemize}
    \item \textbf{$S_1$}: \textit{If people have a fever, then they have a high temperature.}
    \item \textbf{$S_2$}: \textit{Maria had a fever.}
    \item \textbf{Fact}: \textit{In fact, Maria did not have a high temperature.}
\end{itemize}

\textit{Why did Maria not have a high temperature?}

\medskip \noindent \textbf{Scenario 4:}
\begin{itemize}
    \item \textbf{$S_1$}: \textit{If there is very loud music, then it is difficult to have a conversation.}
    \item \textbf{$S_2$}: \textit{If there is very loud music, then the neighbors complain.}
    \item  \textbf{$S_3$}: \textit{The music was loud.}
    \item \textbf{Fact}: \textit{In fact, the neighbors did not complain.}
\end{itemize}

\textit{Why did the neighbors not complain?}

\medskip \noindent \textbf{Scenario 5:}
\begin{itemize}
    \item \textbf{$S_1$}: \textit{If people are worried, then they find it difficult to concentrate.}
    \item \textbf{$S_2$}: \textit{If people are worried, then they have insomnia.}
    \item  \textbf{$S_3$}: \textit{Alice was worried.}
    \item \textbf{Fact}: \textit{In fact, Alice did not find it difficult to concentrate.}
\end{itemize}

\textit{Why did Alice not find it difficult to concentrate?}

\medskip \noindent \textbf{Scenario 6:}
\begin{itemize}
    \item \textbf{$S_1$}: \textit{If you follow this diet, then you lose weight.}
    \item \textbf{$S_2$}: \textit{If you follow this diet, then you have a good supply of iron}
    \item  \textbf{$S_3$}: \textit{John followed this diet.}
    \item \textbf{Fact}: \textit{In fact, John did not lose weight.}
\end{itemize}

\textit{Why did John not lose weight?}

\medskip \noindent \textbf{Scenario 7:}
\begin{itemize}
    \item \textbf{$S_1$}: \textit{If someone is very kind to you, then you like that person.}
    \item \textbf{$S_2$}: \textit{If someone is very kind to you, then you are kind in return.}
    \item  \textbf{$S_3$}: \textit{Jocko is very kind to Kristen.}
    \item \textbf{Fact}: \textit{In fact, Kristen did not like Jocko, and she were not kind in return.}
\end{itemize}

\textit{Why did Kristen not like Jocko and was not kind to him?}

\medskip \noindent \textbf{Scenario 8:}
\begin{itemize}
    \item \textbf{$S_1$}: \textit{If a match is struck, then it produces light.}
    \item \textbf{$S_2$}: \textit{If a match is struck, then it gives off smoke.}
    \item  \textbf{$S_3$}: \textit{Mary struck a match.}
    \item \textbf{Fact}: \textit{In fact, the match produced no light, and it did not give off smoke.}
\end{itemize}

\textit{Why did the match produce no light and gave off no smoke?}

\medskip \noindent \textbf{Scenario 9:}
\begin{itemize}
    \item \textbf{$S_1$}: \textit{If people are nervous, then their hands shake.}
    \item \textbf{$S_2$}: \textit{ If people are nervous, then they get butterflies in their stomach.}
    \item  \textbf{$S_3$}: \textit{Patrick was nervous.}
    \item \textbf{Fact}: \textit{In fact, Patrick's hands did not shake, and he didn't get butterflies in his stomach.}
\end{itemize}

\textit{Why did Patrick's hands not shake and he didn't get butterflies in his stomach?}

\medskip
After going through all nine scenarios, the participants were asked the following two questions:

\smallskip \noindent \textbf{Q1:} \textit{Describe in your own words how you approached explaining what was going on. Was there a specific reason why you chose to retain or discard certain information?} 

\smallskip \noindent \textbf{Q2:} \textit{On a scale from 1 (strongly disagree) and to 5 (strongly agree), I feel that being provided an explanation will help me better understand the fact.}

\smallskip \noindent Figure~1 shows the distribution of the Likert question (Q2).

\subsubsection{Experiment 2}

Building on the findings of Experiment 1, which demonstrated a strong tendency among participants to resolve inconsistencies through non-minimal revisions, in this experiment, we look at how people actually revise their beliefs when they are given an explanation. This experiment is also relevant to human-aware AI systems, where AI agents provide explanations to human users.

\subsection{\textbf{Participants and Design}}
We recruited $60$ participants from the Prolific platform with the same requirements as before. In this follow-up study, rather than having the participants generate their own explanations, they were presented with some of the most plausible explanations created by participants in Experiment 1, and then asked to describe how they would revise their information in light of the given explanation. To ensure that they will not discard the explanation, they were informed that the explanation is trustworthy. Unlike in Experiment 1, however, the participants were only shown problems of Type II and III. The reason is that these problem types contain more information (e.g., two conditionals and one categorical proposition), and thus it is easier to measure if their revisions are minimal or not.

\begin{figure*}[!t]
    \centering
    \includegraphics[width=0.9\textwidth]{response_histograms.pdf}
    \caption{Distribution of Belief Changes per Question.}
    \label{fig:changes}
\end{figure*}

The scenarios the participants saw can be seen below:

\medskip \noindent \textbf{Scenario 1:}
\begin{itemize}
    \item \textbf{$S_1$}: \textit{If there is very loud music, then it is difficult to have a conversation.}
    \item \textbf{$S_2$}: \textit{If there is very loud music, then the neighbors complain.}
    \item  \textbf{$S_3$}: \textit{The music was loud.}
    \item \textbf{Fact}: \textit{In fact, the neighbors did not complain.}
    \item \textbf{Explanation:} \textit{Explanation: If the neighbors are away on vacations, then very loud music does not lead to complaints.}
\end{itemize}

\medskip \noindent \textbf{Scenario 2:}
\begin{itemize}
    \item \textbf{$S_1$}: \textit{If people are worried, then they find it difficult to concentrate.}
    \item \textbf{$S_2$}: \textit{If people are worried, then they have insomnia.}
    \item  \textbf{$S_3$}: \textit{Alice was worried.}
    \item \textbf{Fact}: \textit{In fact, Alice did not find it difficult to concentrate.}

    \item  \textbf{Explanation:} \textit{If people have effective coping strategies, then they may still be able to concentrate despite being worried.}
\end{itemize}

\medskip \noindent \textbf{Scenario 3:}
\begin{itemize}
    \item \textbf{$S_1$}: \textit{If you follow this diet, then you lose weight.}
    \item \textbf{$S_2$}: \textit{If you follow this diet, then you have a good supply of iron}
    \item  \textbf{$S_3$}: \textit{John followed this diet.}
    \item \textbf{Fact}: \textit{In fact, John did not lose weight.}

    \item  \textbf{Explanation:} \textit{If people have metabolic imbalances, then following a particular diet may not result in weight loss.}
\end{itemize}

\medskip \noindent \textbf{Scenario 4:}
\begin{itemize}
    \item \textbf{$S_1$}: \textit{If someone is very kind to you, then you like that person.}
    \item \textbf{$S_2$}: \textit{If someone is very kind to you, then you are kind in return.}
    \item  \textbf{$S_3$}: \textit{Jocko is very kind to Kristen.}
    \item \textbf{Fact}: \textit{In fact, Kristen did not like Jocko, and she were not kind in return.}

    \item \textbf{Explanation:} \textit{If people have had negative past experiences with someone, then they may not like that person or reciprocate kindness despite the person being kind to them.}
    
\end{itemize}

\medskip \noindent \textbf{Scenario 5:}
\begin{itemize}
    \item \textbf{$S_1$}: \textit{If a match is struck, then it produces light.}
    \item \textbf{$S_2$}: \textit{If a match is struck, then it gives off smoke.}
    \item  \textbf{$S_3$}: \textit{Mary struck a match.}
    \item \textbf{Fact}: \textit{In fact, the match produced no light, and it did not give off smoke.}

    \item \textbf{Explanation:} \textit{If the match is wet, then it will neither produce light nor give off smoke.}
\end{itemize}

\medskip \noindent \textbf{Scenario 6:}
\begin{itemize}
    \item \textbf{$S_1$}: \textit{If people are nervous, then their hands shake.}
    \item \textbf{$S_2$}: \textit{ If people are nervous, then they get butterflies in their stomach.}
    \item  \textbf{$S_3$}: \textit{Patrick was nervous.}
    \item \textbf{Fact}: \textit{In fact, Patrick's hands did not shake, and he didn't get butterflies in his stomach.}

    \item \textbf{Explanation:} \textit{If individuals have practiced stress-management techniques, then they may not exhibit shaky hands or butterflies in the stomach when nervous.}
\end{itemize}

After each single scenario, the participants answered the following question:

\smallskip \noindent \textit{Describe in your own words how you will revise the information. Was there a specific reason why you chose to retain or discard information from the speakers? To be brief, you can write: keep \textbf{$S_1$}, discard \textbf{$S_1$}, alter \textbf{$S_1$}, and so on (if you alter, please describe how).}

\smallskip 
Finally, Figure \ref{fig:changes} shows the distribution of user belief changes per question asked.

\end{document}